\documentclass{article}

\usepackage[preprint]{neurips_2026}

\usepackage[utf8]{inputenc} 
\usepackage[T1]{fontenc}    
\usepackage[colorlinks=true,allcolors=blue]{hyperref}       
\usepackage{url}            
\usepackage{booktabs}       
\usepackage{amsfonts}       
\usepackage{nicefrac}       
\usepackage{microtype}      
\usepackage{xcolor}         
\usepackage{wrapfig}

\usepackage{amsmath}
\usepackage{amsthm}
\usepackage{mathtools}
\usepackage[mathscr]{euscript}
\usepackage{dsfont}

\let\P\undefined
\DeclareMathOperator*{\P}{\mathbb{P}}
\DeclareMathOperator*{\E}{\mathbb E}
\DeclareMathOperator{\tr}{tr}
\DeclareMathOperator{\rank}{rank}
\DeclareMathOperator{\range}{range}

\DeclarePairedDelimiter{\bracket}{[}{]}
\DeclarePairedDelimiter{\paren}{(}{)}

\newcommand{\sX}{{\mathscr X}}
\newcommand{\sY}{{\mathscr Y}}
\def\Rset{\mathbb{R}}
\newcommand{\1}{\mathds{1}}

\newcommand{\ignore}[1]{}

\theoremstyle{plain}
\newtheorem{theorem}{Theorem}[section]

\newtheorem{proposition}[theorem]{Proposition}
\newtheorem{fact}[theorem]{Fact}

\usepackage{subcaption}
\usepackage{xcolor}

\title{A Bitter Lesson for Data Filtering}

\author{%
  Christopher Mohri \\
  Department of Computer Science\\
  Stanford University\\
  \texttt{xmohri@stanford.edu} \\
  \And
  John Duchi \\
  Departments of Statistics and Electrical Engineering \\
  Stanford University\\
  \texttt{jduchi@stanford.edu} \\
  \And
  Tatsunori Hashimoto \\
  Department of Computer Science\\
  Stanford University\\
  \texttt{thashim@stanford.edu} \\
}

\begin{document}

\maketitle

\begin{abstract}
    We investigate data filtering for large model pretraining via new scaling studies that target the high compute, data-scarce regime. In spite of an apparently common belief that filtering data to include only high-quality information is essential, our experiments suggest that with enough compute, the best data filter is no data filter. We find that sufficiently trained large parameter models not only tolerate low-quality and distractor data, but in fact benefit from nominally ``poor'' data.

\end{abstract}

\section{Introduction}

The standard approach to select pretraining data for language models is to filter text from sources like Common Crawl (CC) \citep{commoncrawl}. It is widely documented that in compute-constrained regimes, where one must train on a subset of CC, different data selection strategies can have a large impact on performance. This is intuitive: all else equal, it seems natural to train on ``higher-quality'' data. As a result, a large body of research has emerged to tackle the data selection problem, with the goal of finding the best subset for pretraining language models \citep{albalak2024surveydataselectionlanguage,li2025datacomplmsearchgenerationtraining}.  

However, not only is large-scale filter ablation heuristic and expensive, but filtering removes data, which is at odds with scaling trends that prescribe ever-increasing amounts of data to improve model performance. For example, the heavily-filtered DCLM-Baseline dataset keeps $\sim\!1\%$ of the original CC, leading to about 3.8 trillion tokens \citep{li2025datacomplmsearchgenerationtraining}. While this is still enormous, it falls short of the Chinchilla-optimal token budget for a 1 trillion parameter model, even after accounting for diminishing returns when epoching \citep{muennighoff2025scalingdataconstrainedlanguagemodels}. The current trend is also to over-train relative to Chinchilla-optimal, which prescribes even more tokens to allow for (relatively) smaller models that are financially feasible to serve \citep{sardana2025chinchillaoptimalaccountinginferencelanguage}. 

We begin by testing the hypothesis that data filtering is necessary at all in the large compute limit. While large-scale machine learning has moved toward task-agnostic pretraining  \citep{raffel2023exploringlimitstransferlearning}, and there is anecdotal evidence that larger computational budgets benefit from looser data filters \citep{goyal2024scalinglawsdatafiltering, muennighoff2025scalingdataconstrainedlanguagemodels}, removing \emph{all} data filtering would be an extreme intervention that uses data considered to be actively harmful \citep{raffel2023exploringlimitstransferlearning}. Our goal in this work is to take this extreme seriously and study the limits of (low-quality) data for transformer pretraining. 

We find evidence that rejects the hypothesis that data filtering is necessary, and that eventually, no existing data filter is likely to improve upon training directly on Common Crawl. In our experiments, we scale down both CC and its filtered versions to keep their relative sizes intact, and then scale computational resources for pretraining on these different datasets. Our two main levers to do so are scaling model size (which requires more compute per training step) and training steps (which eventually leads to epoching). When comparing the best achieved performance, regardless of computational cost, our main finding is that the full pool outperforms our selected filters. 

Our findings are robust as we scale our experiments by 2 orders of magnitude, and we find that we can continue to see the effects from our small pool experiments as long as the models are sufficiently large. Furthermore, we find a predictable relationship between pool size, training steps, and model size which enables us to build scaling laws that predict how much compute is needed for no filter to be optimal for a particular pool size. Using this, we find that the 240 trillion token Common Crawl pool from DCLM-Pool may become optimal as soon as 1e+30 FLOPs.

These initial findings lead us to study the robustness of pretraining to ``junk'' data. Surprisingly, sufficiently large models are highly robust to irrelevant or junk data and can extract useful information even from highly noisy data. We test this using randomly generated strings and documents with shuffled word orders. While performance degrades at low compute budgets, sufficiently trained large models close the gap. Remarkably, these models even benefit from shuffled-word documents, despite only the unigram distribution of the documents remaining intact.

Overall, our experiments suggest that sufficiently large models that are trained for sufficiently long can benefit from the full CC dataset. While it is possible to construct harmful data, which could for example be non-factual content that looks identical to high-quality data, we do not find large amounts of this in CC. As a result, data filtering may suffer from the bitter lesson \citep{sutton2019bitter} in which human-designed filters that perform well at the small scale are eventually replaced by simple, no-filter approaches that scale more gracefully with compute. 

We structure the paper as follows. In Section~\ref{sec:preliminaries}, we provide the basic experimental setup, followed by experiments on filtering in Section~\ref{sec:filtering}. We then move to adding data to our CC pool in Section~\ref{sec:injection}, and scaling the pool size in Section~\ref{sec:scaling}. We finish with edge cases in Section~\ref{sec:degradation} and a theoretical model in Section~\ref{sec:theory} to provide a post-hoc explanation of the observed phenomena. 

\subsection{Related Work}

\textbf{Data-constrained pretraining.}
Several prior works consider the data-constrained pretraining regime.
\citet{muennighoff2025scalingdataconstrainedlanguagemodels} derive scaling laws that factor data repetition into the original Chinchilla scaling laws, finding diminishing returns after around 4 epochs on the data and that adding code
data and using looser perplexity-based filters mitigates data scarcity.
However, the authors recommend filtering ``noisy datasets'' and train on
subsets of C4 \citep{raffel2023exploringlimitstransferlearning}, while
the current work directly trains on (parsed) Common Crawl and finds evidence
in support of no filtering.
\citet{kim2025pretraininginfinitecompute} study the question of
algorithmic improvements in a data-constrained but compute-unlimited
setting. We share a similar experimental setup (where we take subsets of a dataset, scale compute on this subset, and then scale the subset size) but differ in the object of analysis (dataset filtering). 

\textbf{Loose data filters.}
The closest work to ours is 
\citet{goyal2024scalinglawsdatafiltering},  who argue that filter thresholds should depend on the compute
budget, showing evidence for vision-language models. 
They derive a scaling law to predict the filtering threshold as a
function of compute budget, and conclude that ``less aggressive
filtering is best'' with ``large compute'' but do not identify the parameter scaling interactions that are critical to our work, and do not show our main findings that for language models, no filter can be the best filter.
\citet{fang2025datasetsdocumentsrepetitionspracticalities} tackle a related
question by artificially repeating ``high-quality''
data to match the scale of loosely filtered data.
They find that the former can outperform the latter in low-compute regimes,
but the high compute regime studied in this work remains fully speculative in their work.
Finally, \citet{gao2021empiricalexplorationqualityfiltering} finds that
filtering aggressively can hurt performance, speculating that this
follows from Goodhart's law [\citeyear{Goodhart1984}], and
\citet{saada2025dataqualityillusionrethinkingclassifierbased} find that
filtering with a quality classifier may improve downstream benchmarks
but not validation losses on ``high-quality'' data.

On the theoretical side, \citet{cheng2024labelershavecloserlook} develop
theoretical models of the data cleaning process, arguing that
given models that have enough fidelity to model noisy data generation
schemes, it is better to not clean data, while cleaning data can
yield more robust learning when models are not perfect.
This prediction dovetails with our subsequent findings.

\textbf{Low quality data.}
Recent works' exploration of the impact of low-quality or intentionally
degraded data on model performance motivates our experiments in
Section~\ref{sec:injection}.
\citet{allenzhu2024physicslanguagemodels33} find that ``junk data''
significantly reduces knowledge capacity in a synthetic data setting, which
aligns with our findings on sufficient model sizes.
Counterintuitively, \citet{li2025baddataleadsgood} argue that pretraining on
toxic data leads to better representations, which makes it easier to remove
toxic behavior during the post-training phase. Investigating the limits of
data structure, \citet{sinha2021maskedlanguagemodelingdistributional} train
on shuffled-word data similar to our shuffled-word experiments, arguing that
the success of masked language models is primarily due to modeling
``higher-order word co-occurrence statistics''. Finally,
\citet{ru2025reallyfilterrandomnoise} train models on randomly generated
integers similar to our randomly generated text in
Section~\ref{sec:injection}, and notice only a small performance drop.

\section{Preliminaries}\label{sec:preliminaries}

We begin with our problem setup. Our goal is to measure the value of a dataset in terms of best possible performance, regardless of computational cost, on metrics of interest such as perplexity and downstream benchmarks. More formally, for a training algorithm $\mathcal{A}$ which accepts as arguments a dataset $D$ of any size, parameter count $M$, and training steps $N$, and outputs a model $\theta\in \Theta$ to be evaluated at a loss $\ell \colon \Theta \to \Rset$, our goal is to find the best achievable performance
\begin{align}\label{def:vod}
\mathcal{L}^\star(D) := \min_{M, N} \; \ell(\mathcal{A}(D, M, N)),    
\end{align}
as a function of the pretraining data. Our formulation has an unconstrained minimum over parameter count $M$ and training steps $N$ in an attempt to extract all the ``juice'' out of a dataset, no matter its size. Empirically, we compute this minimum by varying $M$ and $N$ over several orders of magnitude until either performance improvements start to plateau or we run out of compute.

Since we do not have the compute budget to train on all of Common Crawl (let alone perform multiple epochs), our experiments are structured around randomly sampled subsets. Let $D_{cc}$ be the entire CC, $D_{cc, m} \subseteq D_{cc}$ be a randomly sampled subset of $m$ tokens, and $f(D_{cc, m}) \subseteq D_{cc, m}$ be a filtered variant of the subset. In Section~\ref{sec:filtering}, we compare $\mathcal{L}^\star(D_{cc, m})$ and $\mathcal{L}^\star(f(D_{cc, m}))$ for standard filtering functions $f$ such as DCLM-Baseline and RefinedWeb and our smallest subset size $m$, to test if the commonly removed documents $D_{cc, m} \setminus f(D_{cc, m})$ are indeed helpful for improving performance. In Section~\ref{sec:injection}, we test model robustness by injecting various ``junk data'' $J$ to form $D_{cc, m} \cup J$, challenging the hypothesis that $\mathcal{L}^\star(D_{cc, m}) < \mathcal{L}^\star(D_{cc, m}\cup J)$ holds.  

Our smaller scale experiments implicitly assume that the better of $\mathcal{L}^\star(f(D_{cc, m}))$ and $\mathcal{L}^\star(D_{cc, m})$ does not change (or at least changes predictably) with $m$, which allows us to scale down and study the function $\mathcal{L}^\star$ at reasonable compute budgets. To investigate whether this is indeed the case, and understand how performance changes as a function of $m$, $M$, and $N$, we additionally scale over the pool size $m$ in Section~\ref{sec:scaling}.

\subsection{Experiment details}

We use the version of Common Crawl provided by \citet{li2025datacomplmsearchgenerationtraining} in their DCLM-Pool dataset, which is all of CC before $2023$ with text extracted from HTML via $\texttt{resiliparse}$ \citep{bevendorff:2018}. This dataset is $240$ trillion GPT-NeoX \citep{black-etal-2022-gpt} tokens and our randomly sampled subsets range from about $670$ million to $10$ billion tokens. When filtering, we use the code provided by \citet{li2025datacomplmsearchgenerationtraining}. We do not use any specialized data curricula or data weights. 

Our models are Llama-style dense transformers ranging from $15$ million to $7$ billion parameters, trained with the Meta Lingua code repository \citep{meta_lingua}. For each of the models, we tune the training step count and weight decay, following prior studies to increase repeatability of the data \citep{fang2025datasetsdocumentsrepetitionspracticalities, kim2025pretraininginfinitecompute}. As is standard, we set the learning rate to decay with model size \citep{brown2020languagemodelsfewshotlearners,kaplan2020scalinglawsneurallanguage}, with an initial tuning stage to determine the decay. We release our configuration files on GitHub.\footnote{\url{https://github.com/chrismohrii/bitter-lesson-data-filtering}}

Our main metrics of interest are the loss (negative log-likelihood) on various datasets, since this is known to correlate with downstream performance and provides smoother measurements than common question-answering benchmarks (likely due to their small size). These datasets are the English portion of C4 \citep{raffel2023exploringlimitstransferlearning}, Fineweb-Edu \citep{penedo2024finewebdatasetsdecantingweb}, which is a pretraining dataset targeting educational texts, and Cosmopedia \citep{benallal2024cosmopedia}, a dataset of synthetically-generated texts. We primarily plot the average loss across these three, but the trends are the same for each individually as well. We also provide results on common benchmarks such as ARC-Easy \citep{clark2018thinksolvedquestionanswering} and PIQA \citep{bisk2019PIQAreasoningphysicalcommonsense} in Appendix~\ref{app:benchmarks}. Since our experiments use pool sizes of only up to 10B tokens, we do not expect to suffer from test set contamination. 

\section{Data Filtering}\label{sec:filtering}

In this section, we test the hypothesis that standard filtered versions of CC achieve a lower loss than the unfiltered CC. Returning to our formulation in~(\ref{def:vod}), when $D_{cc, m}$ is an $m$-token subset of CC and $f$ is a filtering function, we are interested in the best of  $\mathcal{L}^\star(D_{cc, m})$ and $\mathcal{L}^\star(f(D_{cc, m}))$. While we evaluate a representative set of standard and relaxed filters, an exhaustive search over the exponential space of subsets is computationally intractable. Our objective is instead to benchmark open curation strategies against the pool and identify if models are able to extract signal from ``low-quality'' data.

We focus on our smallest CC pool size (about $670$ million tokens) where ablations are the cheapest, and curate five filtered versions of this pool by applying the filters described below, all of which are used in \citet{li2025datacomplmsearchgenerationtraining}. The first three are individual filters applied in the initial ``heuristic cleaning'' stage of DCLM-Baseline, and ablating them alone gives us pretraining datasets that are larger and more loosely filtered than standard. The fourth gives the end result of the ``heuristic cleaning'' stage, and the last gives the result of the full filtering pipeline. 
\begin{figure}[t]
    \centering
    \includegraphics[width=1\linewidth]{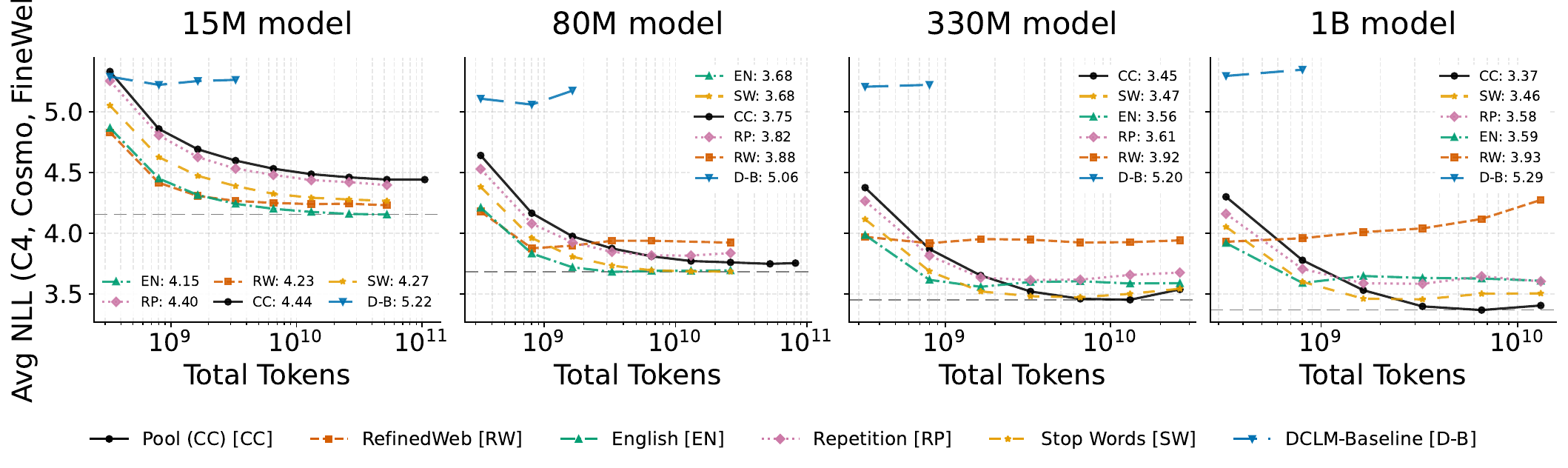}
    \caption{Comparison of models on 670M token CC pool and five filtered subsets. For sufficiently large models (330M+), the unfiltered pool (black) outperforms all five filters (colors) after sufficiently many optimization steps (x-axis, tokens under multiple epochs).}
    \label{fig:filters-600m-pool}
\end{figure}

\begin{wrapfigure}{tr}{0.4\textwidth} 
  \centering
  \includegraphics[width=0.38\textwidth]{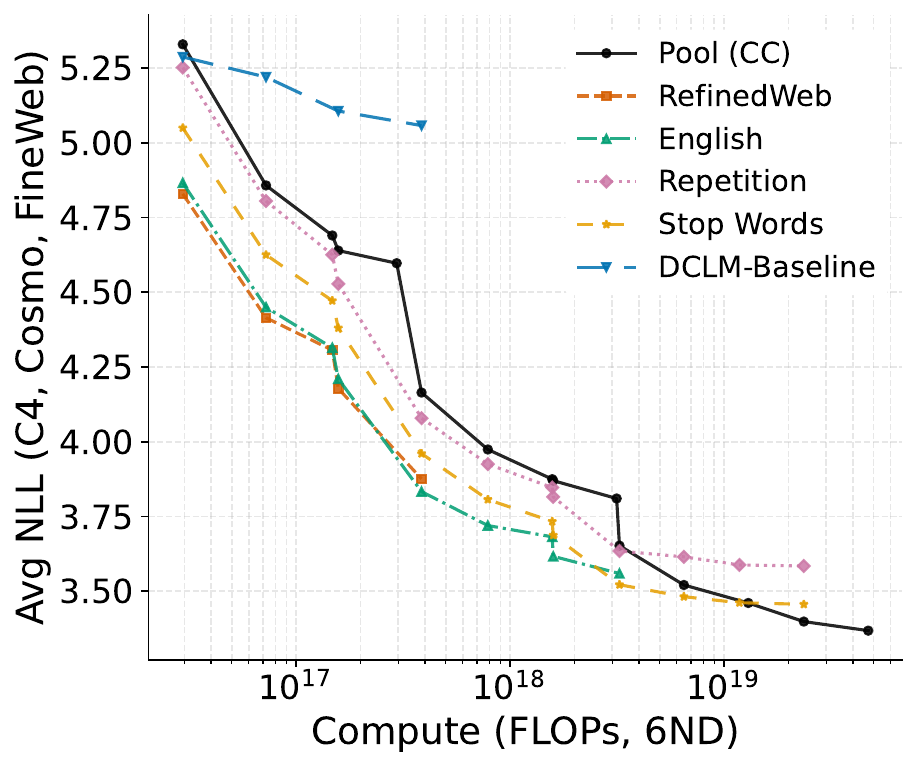}
  \caption{Pareto frontier of Figure~\ref{fig:filters-600m-pool} showing that in high-compute regimes, pool becomes optimal.}
  \label{fig:pareto-670m-pool}
\end{wrapfigure}

\textbf{English filter.} This filter first obtains an English score for a document using a fastText classifier \citep{joulin2016bagtricksefficienttext} and then applies a threshold to this score. According to our tokenizer, $28.2 \%$ of the data is left after applying this filter.

\textbf{Repetition filter.} This filter originates from the data curation stage of the Gopher model, with the motivation that ``excessive repetition is often linked with uninformative content'' \citep{rae2022scalinglanguagemodelsmethods}. It splits documents into segments of various granularities, such as lines, paragraphs, or n-grams, and applies a threshold on the duplicate fraction of these segments. According to our tokenizer, $45.3 \%$ of the data is left after applying this filter.

\textbf{Stop word filter.} This filter ensures that a document contains at least 2 occurrences of English stop words from the following list: ``the'', ``be'', ``to'', ``of'', ``and'', ``that'', ``have'', and ``with''. According to our tokenizer, $50.4 \%$ of the data is left after applying this filter.

\textbf{RefinedWeb.} This consists of the filters above along with other similar filters, in an attempt to reproduce the RefinedWeb dataset \citep{penedo2023refinedweb}. According to our tokenizer, $13 \%$ of the data is left after applying this filter.

\textbf{DCLM-Baseline.} This dataset applies deduplication and quality-based filtering with a fastText classifier to RefinedWeb. According to our tokenizer, $2.1\%$ of the original pool data is left after applying this filter. We address questions of severe data scarcity in Appendix~\ref{app:benchmarks}.

In Figure~\ref{fig:filters-600m-pool}, we show the average loss for each dataset as compute is varied with both model size and training steps. Each point consists of a separate training run, with its own warmup and cosine decay learning rate schedule.  Overall, the pool (CC) reaches the best loss of $3.37$ on the 1B model, and its loss has not visibly plateaued from scaling model size. Outperforming the filtered datasets requires both a sufficiently large model \textit{and} a sufficiently large training step count. While we have not trained the $15$M model until the loss starts to increase again because the loss continues to decrease even at a training budget of $100$B total tokens, it does not appear as though the pool will ever outperform any of the first four filtered datasets. As we transition to the larger models, we observe crossing points on the loss curves between the pool and filtered versions, and these crossing points appear earlier as model size increases. 

In Figure~\ref{fig:pareto-670m-pool}, we take the same runs from Figure~\ref{fig:filters-600m-pool} and derive a compute-performance Pareto frontier. We calculate the compute for a run with the standard $6NM$ approximation \citep{kaplan2020scalinglawsneurallanguage}, where $N$ is the number of total training tokens and $M$ is the number of model parameters. As compute is increased, the pool transitions from the worst-performing dataset to the best. Perhaps surprisingly, not all datasets enjoy a point on the overall Pareto frontier: at every given compute level, there are at least two better-performing datasets than the repetition filtered dataset.

Overall, these experiments suggest that pretraining is surprisingly resilient. Even at our scale, we see that the pool eventually beats the performance of all the filtered variants. This can be counterintuitive, since we might expect some junk data to hurt model performance. To further explore this phenomenon, we create artificial low quality data to probe the limits of pretraining robustness in the next section.

\section{Data Injection}\label{sec:injection}

We now test the limits of model robustness by deliberately injecting low-quality data. We investigate the hypothesis that the best achievable performance strictly degrades when curated ``junk'' distributions are added to the pretraining pool. More formally, if $D_{cc, m}$ is a subset of CC and $J$ represents the injected low-quality dataset, we are interested in the best of $\mathcal{L}^\star(D_{cc, m})$ and $\mathcal{L}^\star(D_{cc, m} \cup J)$. Our first variant of $J$ is designed to be devoid of any useful signal, and the second is designed to have some useful signal but of extremely low quality (Examples in Figure~\ref{fig:junk_examples}).

\textbf{Randomly generated strings.} We define a vocabulary of 10,000 words by uniformly sampling 3 to 8 characters from the lowercase English alphabet (a-z). We then sample uniformly from these words and concatenate them with a space character to form documents.

\textbf{Additional shuffled pool documents.} We take additional CC documents that are not included in our CC subset and randomly shuffle the order of the words in each document.

\begin{figure}[t]
    \centering
    \includegraphics[width=1\linewidth]{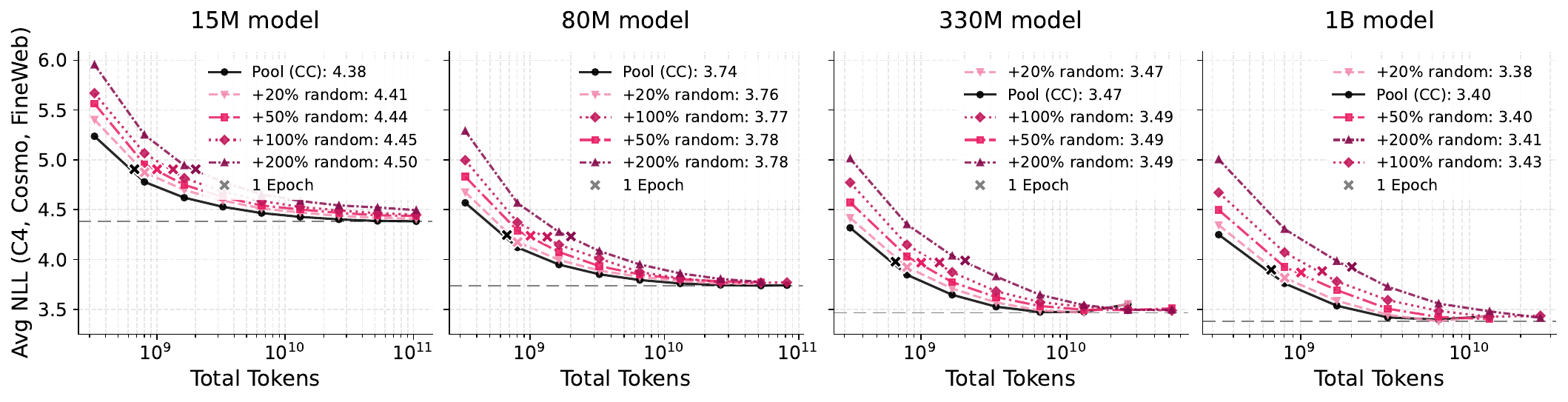}
    \includegraphics[width=1\linewidth]{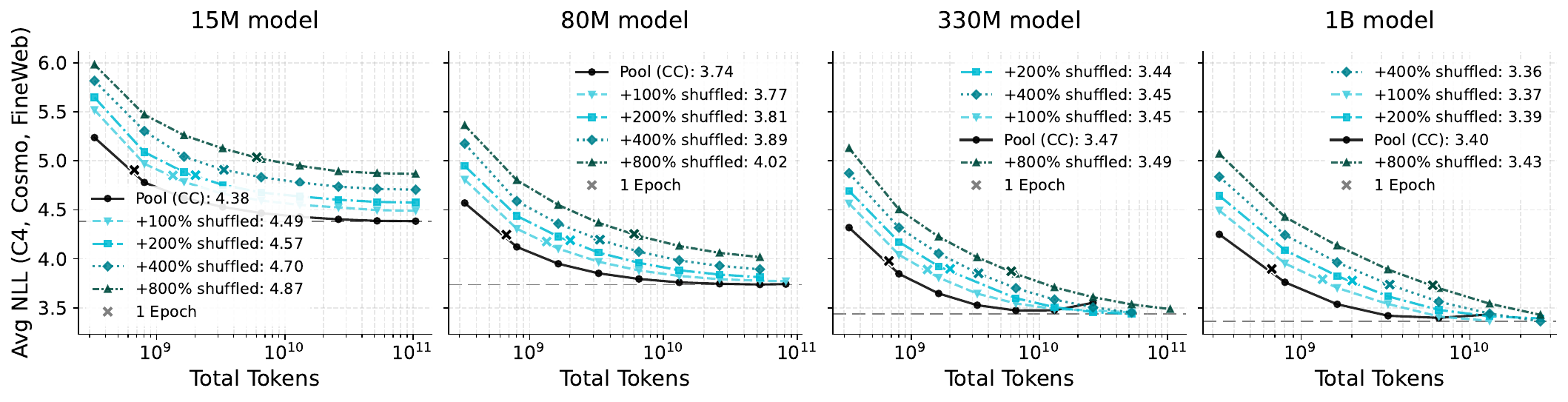}
    \caption{670M-token CC pool versus junk-injected versions. Plots show a surprising robustness to random data (top) for large models with consistent gains from low-quality (shuffled) data (bottom) with sufficiently many epoched training steps (x-axis). }
    \label{fig:injection-600m-pool}
\end{figure}

In Figure~\ref{fig:injection-600m-pool}, we provide the comparison of the two new datasets alongside the CC pool when varying model size and training step count. We have included varying amounts of injected junk data, up to 8 times the pool size in the shuffled words case, leaving only about 10\% of untouched CC documents. In both cases, it is immediate that the injected data has not completely reduced model performance to random performance, which would result in a cross-entropy loss or negative log-likelihood of $-\log(1/V)$ where $V$ is the vocabulary size, giving approximately 10.8 with our tokenizer. 

For both dataset variants, a sufficiently large model is required to match the pool performance. With the 15M model, there is a separation in the loss curves, regardless of the ratio of injected documents. As we transition to larger models, this gap closes. On the 330M model, we even see that all of the shuffled datasets---except the +800\% shuffled dataset---surpass the pool performance after around 11B training tokens. We have not trained the +800\% shuffled dataset past 100B tokens, but we expect it will also surpass pool performance since its loss has not visibly plateaued. We also expect it to cross this threshold even earlier on the larger 1B model because of its faster-decreasing loss. In the case of the randomly generated strings, the random datasets appear to more closely match the performance of the pool, but overall, the gaps are still closing with model size.

Our intuition for the differences between these datasets is that the shuffled words are more ``confusing'' for a smaller model, whereas the randomly generated strings are more clearly drawn from a different distribution. As we scale model size, and thus perhaps the ability to differentiate between the two distributions, there is more signal to extract from the shuffled data as it contains additional unseen pool documents with the unigram distribution intact. If, for example, we shuffled the sentence ``The capital of France is Paris'', we would still see ``France'' and ``Paris'' together, helping the model understand that there may be some connection between the two. We attribute the improved performance with +20\% random to either a potential regularization effect or an unintended similarity to natural text, which generally features words of similar lengths separated by space characters.

\begin{figure}[t]
    \centering
    \begin{subfigure}[t]{0.49\textwidth}
        \fbox{
            \begin{minipage}{0.9\linewidth}
                this RC [English]WLtoys topics cannot You cannot and Quadcopter Instruction and Replies post attachments in your other \ignore{Quadcopter cannot Topic You You Instruction forum download RC in reply this posts post posts edit permissions your download Forum forum in Last New You in [English]WLtoys New this topics to Manual }$\ldots$
            \end{minipage}
        }
        \label{fig:sub_random_example}
    \end{subfigure}
    \hfill 
    \begin{subfigure}[t]{0.49\textwidth}
        \fbox{
            \begin{minipage}{0.9\linewidth}
                htb hqovl bwdws wesqae wcb xkk xhkqfm jhvbvutr nqxm ykzpnklm trgikh nymn dcncwn osyrr zpvrrly yhdsrr nyvo ynx \ignore{kodgmkf owdrj kwewce hthz tdzejcd hgcbeom wok veyrr tloheva cljdboy tlnxkaq isgu begzsn otna xlnmbfrp ocepbeav rapjn impg fohe dwbclg luau pha akje cneyh gsqwbex lphxv khg ceyjmut vlbitr bualnom cmv wfr otij slzizkk kcx hutzsd snxsivl axtppj rptwuwsc bgzarjql ngvyz vxzlq jfbfnp turqtrps dsqnf kveuulgf wndb cwkigk bjhk tesdpl klyibq vjpz gswhuthu} $\ldots$
            \end{minipage}
        }
        \label{fig:sub_shuffled_example}
    \end{subfigure}
    \caption{Examples of ``low-quality'' documents injected into CC pool.  \textbf{Left}: documents with shuffled word order. \textbf{Right}: documents with randomly generated strings.}
    \label{fig:junk_examples}
\end{figure}

\section{Scaling Pool Size}\label{sec:scaling}

Do our experiments have implications for large-scale pretraining where the pool is all of CC? While suggestive, our 670M pool sample is quite far from the available internet stock of 200-500 trillion tokens \citep{villalobos2024rundatalimitsllm}, and scale effects could significantly change our conclusions.

To address these concerns, we turn to scaling studies that show our effects are consistent across scale by varying our pool size and model sizes across $2$ orders of magnitude, and build up to a prediction of the compute threshold where the CC pool in DCLM-Pool (240T tokens) outperforms RefinedWeb. Due to the computational costs of these runs, we focus solely on the comparison between CC and $f=$ RefinedWeb, with the goal of making a prediction on the better of $\mathcal{L}^\star(D_{cc})$ and $\mathcal{L}^\star(f(D_{cc}))$.

\begin{figure}[ht]
    \centering
    \includegraphics[width=1\linewidth]{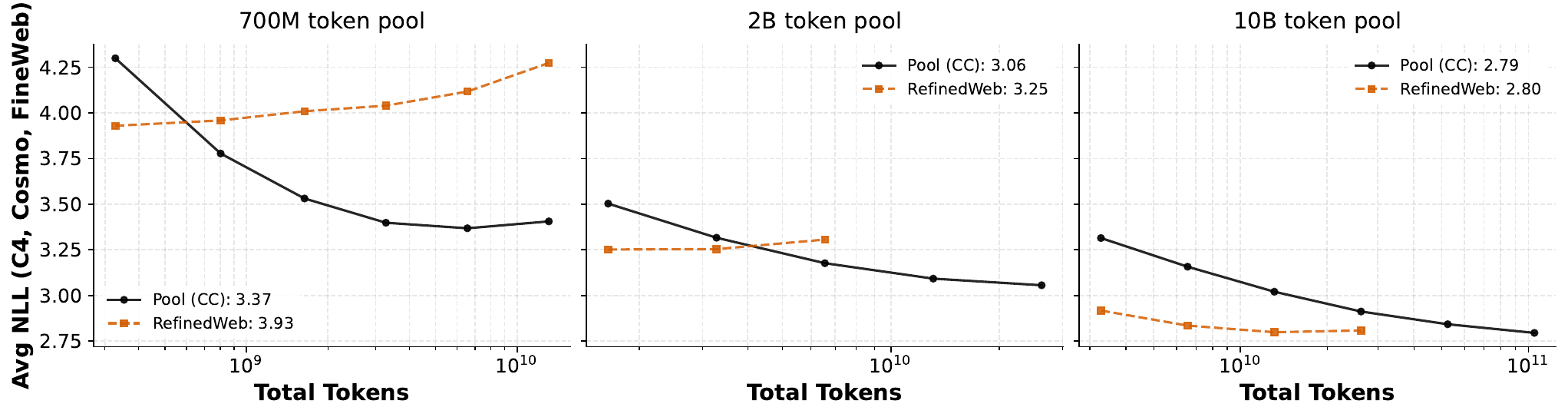}
    \includegraphics[width=1\linewidth]{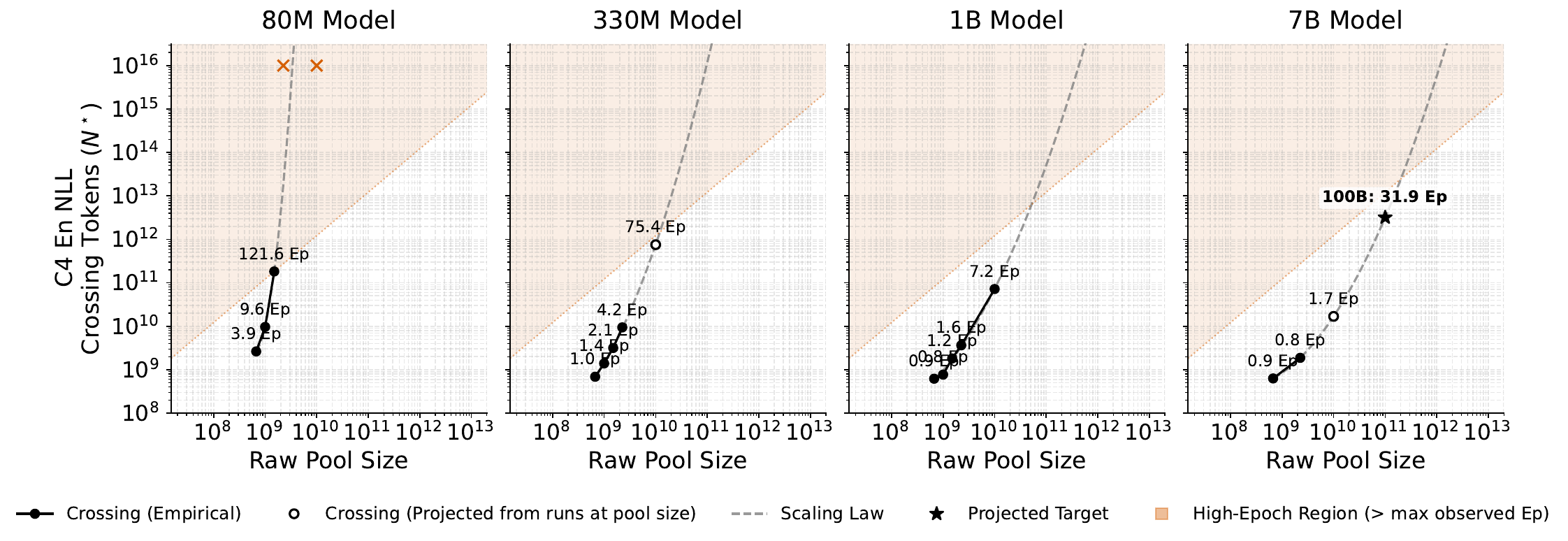}
    \caption{\textbf{Top:} $1$B model performance as we vary the pool size; the total needed steps for pool to outperform RefinedWeb grows rapidly.
\textbf{Bottom:} Crossing point as a function of pool size for various model sizes. Markers each represent a crossing point (e.g. top panel), with text showing the epoch count. Epochs above the largest observed crossing point (121.6 epochs) are shaded to indicate unreliability at extreme epoch counts. Dashed lines show second-order polynomial fits used to interpolate data and show growth trends.}
    \label{fig:crossing-points}
\end{figure}

Understanding how pool size affects performance requires us to map out the joint space of pool size $m$, model parameters $M$ and step count $N$. As a simplifying first step, we represent step count as a function of the other two variables,
\[N
^\star(M, m) := \min \left\{ N \colon \ell(\mathcal{A}(D_{cc, m}, M, N)) < \min_{N'} \ell(\mathcal{A}(f(D_{cc, m}), M, N'))\right\},\]
where we have taken the minimal winning $N$ (if one exists) as the output of the function. Given our intuition and experimental evidence that performance improves with larger models when sweeping over step count (see Figures~\ref{fig:filters-600m-pool} and~\ref{fig:injection-600m-pool}), this serves as a succinct representation of our 3 variable space.

Our step count function $N^\star$ has predictable behavior in both of its arguments. When we fix $M=1$B and increase the pool, we make two important observations. First, we see that the point at which the pool performance becomes better than RefinedWeb ($N^\star$) grows rapidly (top half of Figure~\ref{fig:crossing-points}), and the precise quantitative rate of growth is super-linear (roughly 10 epochs are needed for the 10B-token pool, compared to roughly three epochs for the 2B-token pool and one epoch for the 670M-token pool). Our second observation is that the validation losses are nonmonotone even with tuned weight decay regularization, suggesting that in extreme epochs (100+), the two may cease to cross.

\begin{wrapfigure}{tr}{0.4\textwidth} 
  \centering
  \includegraphics[width=0.4\textwidth]{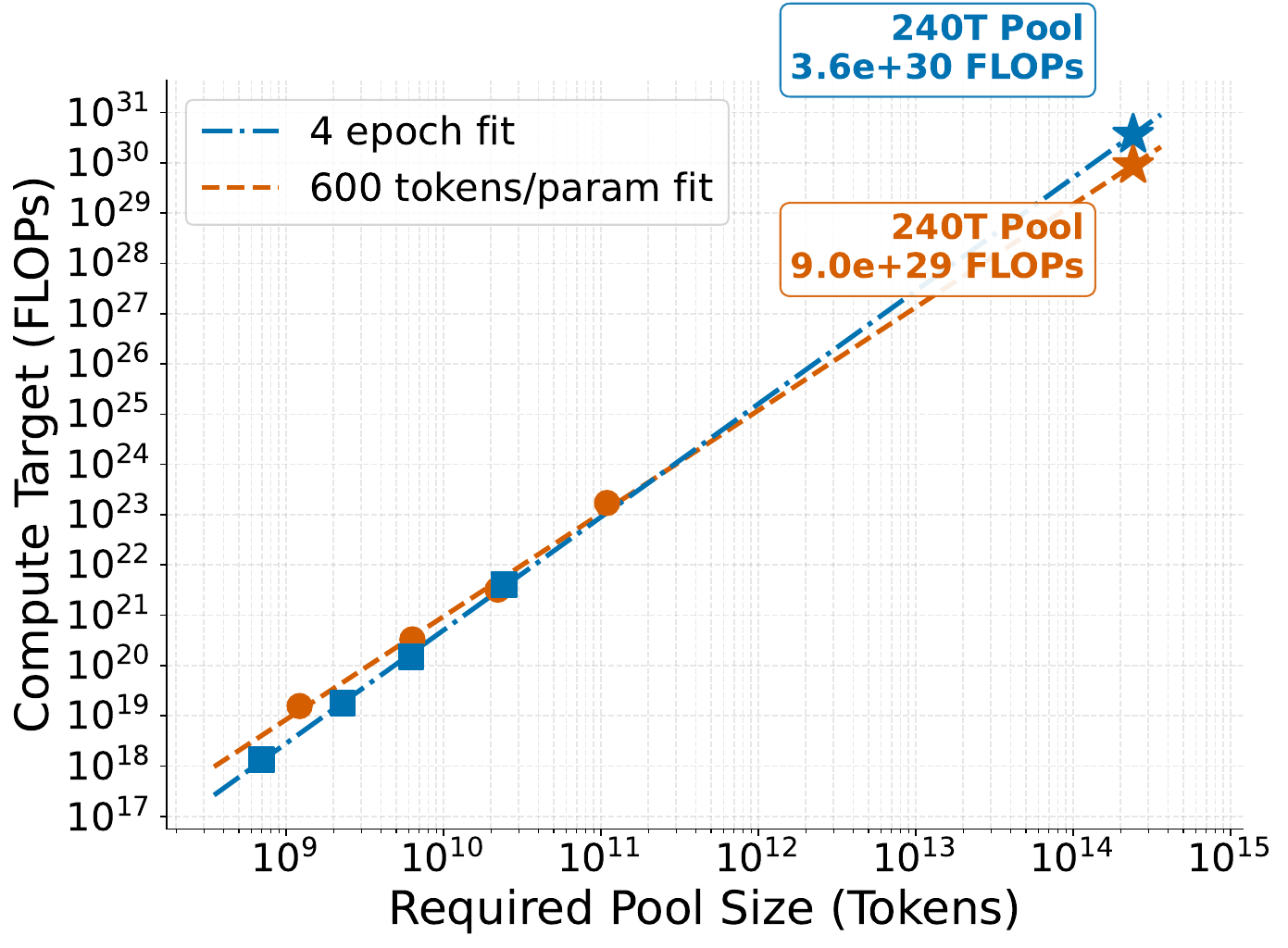}
  \caption{Scaling laws for optimality of no data filtering. Two scaling laws with token-per-parameter scaling (in orange) and epoch constraints (in blue) both give highly linear scaling and predict similar budgets (1e+30 FLOPs).}
  \label{fig:extrapolation}
\end{wrapfigure}

We now also vary model size $M$ to understand the joint scaling behavior as model size grows with pool size. Figure~\ref{fig:crossing-points} shows a sweep over $N^\star(M,m)$ with each panel varying $M$ and the x-axis varying $m$. On the leftmost plot with the 80M model, we can clearly see that crossing points cease to exist, even across our evaluated pool sizes: while there is a crossing point for the smallest 670M-token pool, there is no longer a crossing point on the largest 10B-token pool as indicated by the dark orange marker. As high-epoch regimes can become nonmonotone, we mark those regions in orange in the plot to indicate that they are unlikely to have any crossing points. As we scale up $M$, however, we see that the epoch counts needed for the pool to win rapidly decrease as a function of model size.

With these observations and our experimental measurements of $N^\star$, we can answer our question of what happens when we scale our pool sizes to the current CC pool size (240T tokens in DCLM-Pool). Are compute levels in the near future likely to reach a point where the entire CC pool is better than RefinedWeb? We follow a simple procedure to build a compute scaling law on top of our $N^\star$ function (Figure~\ref{fig:crossing-points}), fitting two types of scaling laws to be robust to misspecification. In our first approach, we start by specifying a token-to-non-embedding-parameter ratio (600:1, following DeepSeek V4). For each model size, this ratio immediately specifies the number of training steps $(N^\star)$ as well as the compute level $(C=6MN^\star)$. We can then estimate the pool size corresponding to this $N^\star$ for each model (using a fitted quadratic to interpolate among our observed data points as described in Appendix~\ref{subsec:scaling_fits}) and build a scaling law against $C$. In our second approach, we instead specify an epoch count (4, based on \citet{muennighoff2025scalingdataconstrainedlanguagemodels}). The epoch count specifies a linear constraint which intersects $N^\star$ for each model at a single point (cf. the orange 120-epoch line in Figure~\ref{fig:crossing-points}). This point specifies the pool size and compute level, which we can then also use to build a scaling law. 

In contrast to the training steps $N^\star$, our compute scaling laws are highly linear (Figure~\ref{fig:extrapolation}), with $R^2$ above $0.99$, and both give similar predictions, near 1e+30 FLOPs for the crossing point. This compute level is quite high, with the best current estimates of frontier pretraining compute near 5e+26 \citep{grok4_modelcard}, but this is far from an outlandish amount of near-future compute, with existing forecasts predicting 1e+29 FLOP training runs by 2030 \citep{epochepri2025aipower}.

\section{Model Degradation}\label{sec:degradation}

In all of our experiments so far, we have seen that regardless of the distribution, more data helps if we are free to train a sufficiently large model for sufficiently long. We should not expect this to be a universal property in machine learning, as a large body of research has been dedicated to the problem of domain adaptation and learning under distribution shift \citep{mansour2023domainadaptationlearningbounds, pmlr-v206-awasthi23b}. Instead, we hypothesize that language models are highly resistant to covariate shifts, and it is ``incorrectly labeled'' data or data with shifts in the conditional distribution from a target metric that can be detrimental. For example, we expect that a model trained on sufficient instances of ``The capital of France is Copenhagen'' will learn the wrong capital of France.

\begin{table}[t]
    \centering
    \caption{Average GPT5-mini judgements on keyword-matched CC data for select MMLU categories.}
    \label{tab:mmlu_averages}
    \vspace{0.1in} 
    \begin{tabular}{lcccc}
        \toprule
        \textbf{Dataset} & \textbf{Support} & \textbf{Refute} & \textbf{Related} & \textbf{Unrelated} \\
        \midrule
        MMLU/world\_religions & 5.89 & 0.00 & 13.22 & 7.50 \\
        MMLU/astronomy & 2.03 & 0.14 & 10.14 & 17.41 \\
        MMLU/college\_biology & 2.67 & 0.17 & 11.07 & 13.40 \\
        MMLU/medical\_genetics & 2.80 & 0.23 & 14.30 & 11.23 \\
        \bottomrule
    \end{tabular}
\end{table}


\begin{wrapfigure}{r}{0.4\textwidth}
    \centering
    \includegraphics[width=\linewidth]{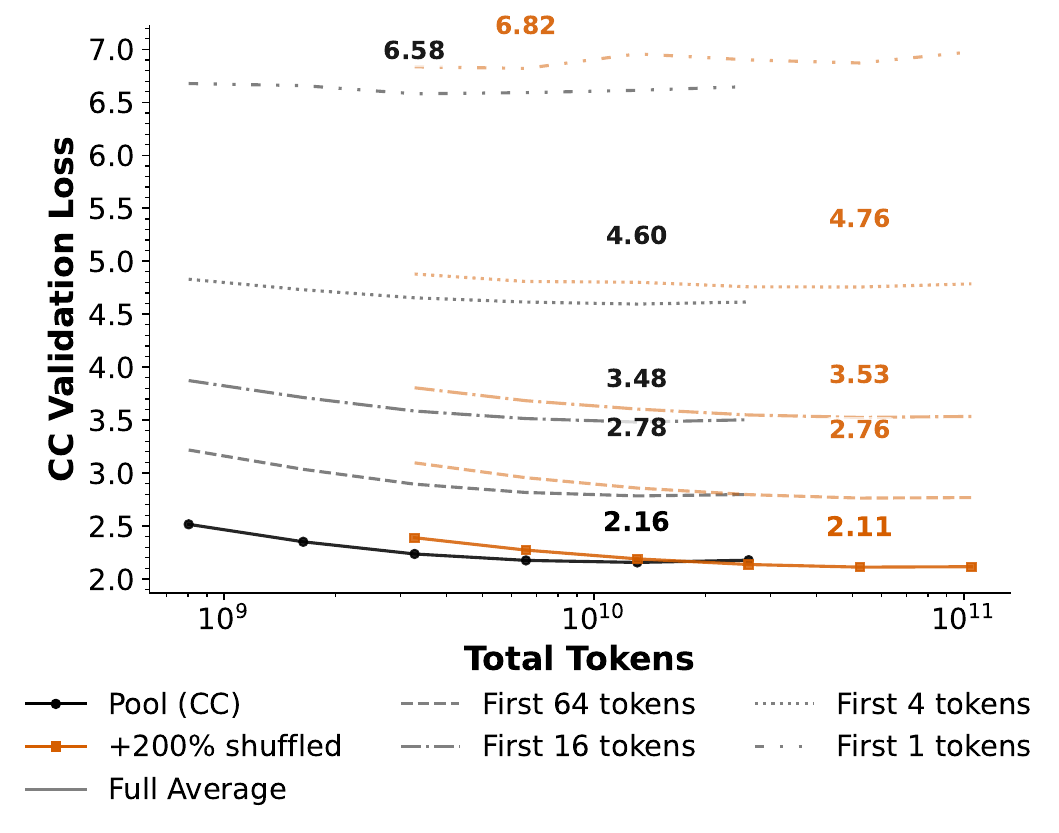}
    \caption{330M model: loss of $670$M pool subset versus $+200\%$ dataset. }
    \label{fig:first-token-losses}
\end{wrapfigure}

While CC is too large to exhaustively search through and contains non-factual content such as conspiracy theories, we argue that such actively harmful content is relatively low frequency. We provide a very brief study to support this with a corpus analysis of MMLU-related documents in CC \citep{hendrycks2021measuringmassivemultitasklanguage}. We first match keywords, and then we prompt GPT5-mini to classify whether the document supports, refutes, is related, or is unrelated to the question and answer. We target MMLU subjects such as world religions, where there are very rare keywords. We present our analysis in Table~\ref{tab:mmlu_averages}. While our search did find mostly unrelated or related but neither supporting nor refuting documents, the average number of documents in support is at least an order of magnitude larger than refuting. In Appendix~\ref{app:theory}, we develop some theory to provide an analysis of when filtering should help, in terms of how factual or correctly labeled a dataset is. 

We now move to a case of distribution shift from our experiments with shuffled word order documents in Section~\ref{sec:injection}. Our metrics were the average validation loss across the entire sequence, but we may expect to suffer from a distribution shift with the loss on the initial tokens in a document, because we changed the distribution from the natural distribution of first tokens that appear in CC. In the case of predicting the very first token, it is impossible to detect whether a document is shuffled by having access only to the empty prefix.

In Figure~\ref{fig:first-token-losses}, we compare the average CC validation loss for CC and the $+ 200\%$ shuffled dataset when we look at the loss on initial segments of the document. 
As we transition from the full average to the loss on only the very first token, the $+200\%$ shuffled dataset loses its advantage over the pool. We do not expect this behavior to change with larger models. However, as most use cases of language models involve more than just a few tokens, we do not anticipate that this is a meaningful degradation.

\section{Theoretical Models}
\label{sec:theory}

We might ask whether the results we have identified are predictable: ought
we expect them?
We present two theoretical models, one here and one
in the appendices, that exhibit the behaviors we see,
suggesting the types of behavior we identify might hold more broadly.



Heuristically, we might hypothesize that once a (transformer)
model is large enough, it can pass ``bad'' data through components
that do not interact with components representing ``good'' data, and
when a model is not large enough, this cannot occur.
Our experiments are consistent with this: large models absorb unfiltered
data without penalty, while smaller models cannot.
In low-rank matrix factorization---the simplest 1 hidden layer (linear)
neural network---we see exactly this behavior at the population level.


To make this more precise, consider predicting vector-valued outputs $y$
(tokens) using a rank $r$ linear transformation of an input $x$.
Assume the pairs $(x, y) \in \Rset^d \times \Rset^m$ come from one of $k$
tasks, where task $i$ occurs with probability $p_i > 0$ and generates $Y =
u_{\star,i}\, v_{\star,i}^\top X_i + \xi$ for independent mean-zero noise $\xi \in
\Rset^m$, where $\Sigma_i = \E[X_i X_i^\top]$ satisfy $\tr(\Sigma_i
\Sigma_{i'}) = 0$ for $i \neq i'$, so that tasks have orthogonal inputs:
one may exactly separate them.
The next proposition, whose proof is in Appendix~\ref{app:theory},
follows.

\begin{proposition}[Rank Necessity under Orthogonal Inputs]
  \label{thm:rank-necessity}
  Let the conditions above hold and $M_{\star,i} = u_{\star,i}\, v_{\star,i}^\top$,
  and define $M_\star = \sum_{i=1}^k M_{\star,i}$
  and $\Sigma = \sum_{i=1}^k p_i\Sigma_i$.
  If $\sigma_1 \ge \cdots \ge
  \sigma_\rho > 0$ are the $\rho \leq k$ positive singular values of
  $M_\star\Sigma^{1/2}$, then for any model rank $r$
  \[
  \min_{\substack{U \in \Rset^{m \times r}\\ V \in \Rset^{d \times r}}} \E\!\big[\|Y - UV^\top X\|^2\big] = \sum_{j=r+1}^{\rho} \sigma_j^2 + \E\!\big[\|\xi\|^2\big],
  \]
  where the sum evaluates to $0$ if $r \ge \rho$.
\end{proposition}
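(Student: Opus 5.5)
The plan is to reduce the population objective to a weighted low-rank approximation problem and then apply the Eckart--Young--Mirsky theorem. Write $X$ for the input drawn from the mixture, so that $X = X_i$ with probability $p_i$, and $\E[XX^\top] = \Sigma = \sum_i p_i \Sigma_i$. For any $W = UV^\top$ of rank at most $r$, condition on the task and use independence and zero mean of $\xi$ to decompose
\[
\E\|Y - WX\|^2 = \sum_{i=1}^k p_i \E\|(M_{\star,i} - W)X_i\|^2 + \E\|\xi\|^2 = \sum_{i=1}^k p_i \tr\big((M_{\star,i}-W)\Sigma_i(M_{\star,i}-W)^\top\big) + \E\|\xi\|^2 .
\]
The noise term is constant, so it suffices to minimize the first sum over rank-$\le r$ matrices $W$.

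The key step uses orthogonality of the inputs. Since $\Sigma_i, \Sigma_{i'}$ are PSD with $\tr(\Sigma_i\Sigma_{i'})=0$, we get $\Sigma_i\Sigma_{i'} = 0$ (from $\|\Sigma_i^{1/2}\Sigma_{i'}^{1/2}\|_F^2 = 0$). Hence the ranges $\range(\Sigma_i)$ are mutually orthogonal. I would argue that $M_{\star,i}\Sigma_{i'} = 0$ for $i\ne i'$; this needs the interpretation that $v_{\star,i}$ acts only on task $i$'s input subspace. Without loss of generality one may take $v_{\star,i} \in \range(\Sigma_i)$, since components outside it do not affect $Y$ on task $i$. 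I would flag this as a WLOG normalization in the proof, and it is the main subtlety. Given it, $M_\star \Sigma_i = M_{\star,i}\Sigma_i$. Then
\[
\sum_i p_i \tr\big((M_{\star,i}-W)\Sigma_i(M_{\star,i}-W)^\top\big) = \sum_i p_i \tr\big((M_\star - W)\Sigma_i(M_\star-W)^\top\big) = \|(M_\star - W)\Sigma^{1/2}\|_F^2 .
\]

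It then remains to compute $\min_{\rank W\le r}\|M_\star\Sigma^{1/2} - W\Sigma^{1/2}\|_F^2$.

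\textbf{Lower bound.} $W\Sigma^{1/2}$ has rank at most $r$, so Eckart--Young gives at least $\sum_{j>r}\sigma_j^2$.

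\textbf{Achievability.} Let $A = M_\star\Sigma^{1/2}$ and let $A_r = \sum_{j\le r}\sigma_j a_j b_j^\top$ be its truncated SVD. The right singular vectors $b_j$ lie in $\range(\Sigma^{1/2})$, so set $W = A_r(\Sigma^{1/2})^{\dagger}$. Then $W\Sigma^{1/2} = A_r P$, where $P$ is the projector onto $\range(\Sigma)$, and $A_r P = A_r$. This $W$ has rank at most $r$ and factors as $UV^\top$ with $U\in\Rset^{m\times r}$ and $V\in\Rset^{d\times r}$, so the bound is attained.

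Finally, $\rho \le k$ because $M_\star$ is a sum of $k$ rank-one matrices. If $r\ge\rho$, we can take $A_r = A$ and the sum is zero. The main obstacle is the orthogonality reduction, namely justifying $M_{\star,i}\Sigma_{i'}=0$. If the statement is intended literally without the normalization on $v_{\star,i}$, I would replace $M_{\star,i}$ by $M_{\star,i}P_i$, where $P_i$ projects onto $\range(\Sigma_i)$, and note that this leaves $Y$ unchanged almost surely.
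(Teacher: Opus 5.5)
Your proposal is correct and matches the paper's proof step for step. The paper likewise decouples the noise and adopts the same WLOG normalization $v_{\star,i}\in\range(\Sigma_i)$, so that $M_{\star,j}\Sigma_i=0$ for $i\neq j$ and the objective collapses to $\|(M_\star - W)\Sigma^{1/2}\|_F^2$. It then uses Eckart--Young for the lower bound and $W = A_r(\Sigma^{1/2})^\dagger$ for achievability; your remark that $\tr(\Sigma_i\Sigma_{i'})=0$ forces $\Sigma_i\Sigma_{i'}=0$ for PSD matrices fills in a detail the paper leaves implicit.
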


The result makes clear that, given a large enough model rank $r$,
a matrix factorization can optimally represent the prediction problem
(so long as $r \ge k$).
On the other hand, without enough capacity
($r < \rho$),
model performance necessarily degrades with interference of the
tasks in $Y$-space, as the singular values of $M_\star \Sigma^{1/2}$ capture.
Moreover, at least at this population level, (regularized) gradient-based
methods are guaranteed to find the optimal matrices $U$ and $V$, because the
objective $\E[\|Y - UV^\top X\|^2]$ has no non-strict saddle points when $r
\ge k$~\citep{BaldiHo88, ZhuLiTaWa18}, and gradient descent converges to
local minimizers with probability 1~\citep{LeeSiJoRe16}.
In a fairly precise sense, then, this simple matrix factorization model
exhibits much of the behavior we see in experiments: with enough capacity,
noise (tasks) can be immediately absorbed, while smaller models suffer, and first-order methods are sufficient
for optimal fitting.

\section{Discussion}

While we have identified ways that scaling compute appears to make filtering immaterial, there are several limitations that lead to natural next steps for research in this direction.

\textbf{Deviations from vanilla pretraining.}  Our setting is restricted to pretraining on dense transformer models, without any data curricula, data weights, or post-training. There may be more unstable architectures such as Mixture of Experts models (MoEs), or phenomena in later stages of training, that require more careful choices with the pretraining data. Other changes, like pretraining on synthetic data, can also have an effect. If we view synthetic data as just augmenting ``high quality'' data and assume that “low quality” data does provide useful information for improving metrics, then including synthetic data may just increase the compute level for the crossing points as in Section~\ref{sec:scaling} by providing more effective tokens. However, if low-quality data mainly acts as a regularizer, then synthetic data may be strictly better. 

\textbf{Duplicate documents}. The expected fraction of duplicate documents increases with subset size. At our subset size, it is likely much smaller than the entire CC. We do not expect that our general conclusions would change, especially as we epoch the data, but this is a variable that likely does not play a large role in our experiments. 

\textbf{Compute.} The compute required for raw Common Crawl to outperform our tested filters is large, up to around $1e30$ FLOPs with our projections in Figure~\ref{fig:extrapolation}. When compute is a bottleneck, we expect filtering to still be important.

\textbf{AI-generated content.} We expect the fraction of AI-generated content in CC to increase, with likely a small amount in our pre-2023 DCLM-Pool dataset. It is unclear whether this will be detrimental.

\textbf{Factuality.} We have conducted an initial study into CC factuality or correctness with Table~\ref{tab:mmlu_averages}, but there are likely some rare edge cases where models trained on the full pool learn inaccuracies.

\newpage
\bibliographystyle{plainnat} 
\bibliography{neurips_2026}

\newpage
\appendix

\section{Experimental Details}

\textbf{Hyperparameters.} Across all experiments, we use a context length of $1024$ tokens, batch size of $2^{19} = 524,288$ tokens, and a $500$ training step warmup. We provide model-specific details in Table~\ref{tab:model_configs}.  All runs have a weight decay tuned in $[0.1, 0.5]$. The learning rates for the models were obtained with an initial tuning stage (and they also match the default learning rates for the 1B and 7B models in the Lingua repository). 

\textbf{Training details and compute.}
Throughout the plots (for example Figure~\ref{fig:filters-600m-pool}), we vary the training steps as powers of $2$. We evaluate a model $5$ times during training and report the best checkpoint (which is almost always the last one, except for rare cases when the training steps are very large compared to data size). All experiments were conducted on H200 GPUs. Each run used only data parallelism on a single 8-GPU node, except for the 7B model which also uses FSDP, varying from less than an hour to up to 2-3 days.  The combined cost of all our experiments exceeds 20,000 H200 GPU hours. 

\begin{table}[h]
\centering
\caption{Model architecture configurations. }
\label{tab:model_configs}
\begin{tabular}{lcccccc}
\toprule
Model & Hidden dim & Layers & Heads & Head dim & Learning rate & Weight decay \\
\midrule
15M   & 128  & 8  & 8  & 16  & $1\times10^{-2}$ & $[0.1, 0.5]$ \\
80M   & 512  & 8  & 8  & 64  & $5\times10^{-3}$ & $[0.1, 0.5]$ \\
330M  & 1024 & 18 & 16 & 64  & $5\times10^{-3}$ & $[0.1, 0.5]$ \\
1B    & 2048 & 17 & 16 & 128 & $5\times10^{-3}$ & $[0.1, 0.5]$ \\
7B    & 4096 & 32 & 32 & 128 & $1\times10^{-3}$ & $[0.1, 0.5]$ \\
\bottomrule
\end{tabular}
\end{table}

\subsection{Scaling law fits}\label{subsec:scaling_fits}

In several of our plots, we fit scaling laws to our empirically obtained measurements. 

\textbf{Figure~\ref{fig:crossing-points}.}
In the bottom half, we fit a second-degree polynomial to the log-log plot due to the super-linear and eventually infinite behavior. The hollow points on the plot are obtained from training runs at the given pool size, but with step counts prior to the crossing point. In those cases, we fit a power law to the (decaying) loss, and extrapolate the first training step or token count where the pool surpasses the best RefinedWeb loss (achieved or extrapolated). The cases where no crossing is ever predicted are marked with an orange ``x'' on the plot, and only appear on the 80M model size plot.

\textbf{Figure~\ref{fig:extrapolation}.} We use a standard power law, where the input is pool size and the output is the compute target. We use the number of non-embedding parameters as the model size when computing for example the 600 token/parameter ratio. 

\section{Additional experiments}\label{app:benchmarks}

In this section, we begin with downstream benchmark results to complement the validation loss metrics from the main text. We use PIQA, ARC-Easy, and SocialIQA \citep{sap2019socialiqacommonsensereasoningsocial} as these have easy enough questions to provide signal at our scale.

In Figure~\ref{fig:filters-600m-pool-benchmarks}, we provide plots analogous to those in Figure~\ref{fig:filters-600m-pool} but for the benchmarks, and Figure~\ref{fig:pareto-600m-pool-benchmarks} is similarly analogous to Figure~\ref{fig:pareto-670m-pool}.
We do the same for the injected datasets: Figure~\ref{fig:random-600m-pool-benchmarks} shows the datasets with random injection and Figure~\ref{fig:shuffled-600m-pool-benchmarks} shows the datasets with shuffled word order. These plots are in general much noisier than the perplexity-based ones in the main text, likely due to the relatively small number of questions in the benchmarks. However, the trends are roughly the same.

\begin{figure}[t]
    \centering
    \includegraphics[width=1\linewidth]{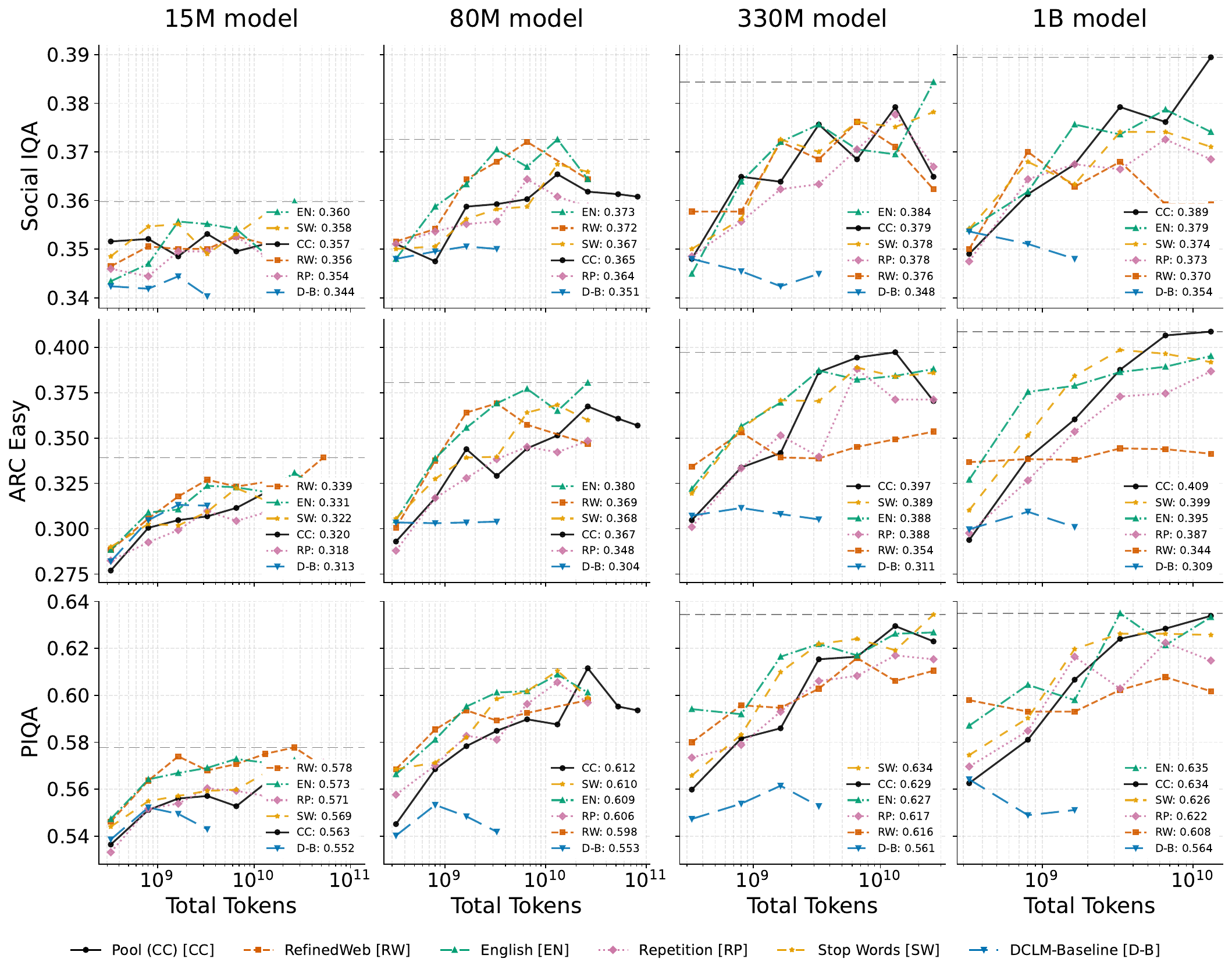}
    \caption{Ablation of $670$M token CC pool and five filtered versions. Each plot is a different model size and the total tokens x-axis corresponds to the number of gradient steps taken (with epoching).}
    \label{fig:filters-600m-pool-benchmarks}
\end{figure}

\begin{figure}[t]
    \centering
    \includegraphics[width=1\linewidth]{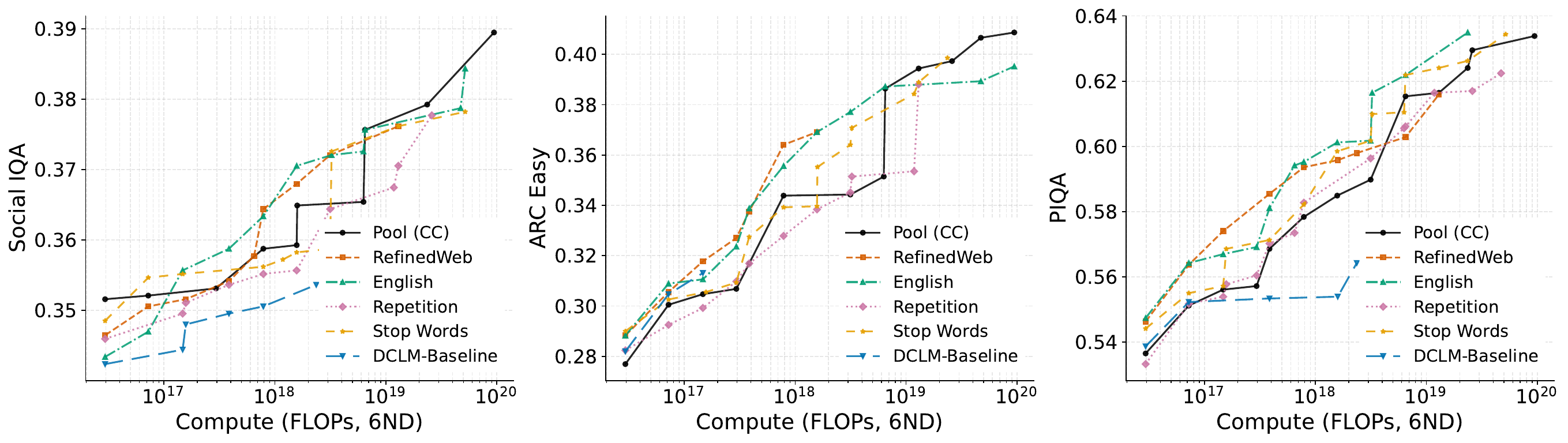}
    \caption{Pareto frontier of compute vs. benchmark performance for CC pool and filtered datasets. The frontier is formed with the same runs as in Figure~\ref{fig:filters-600m-pool-benchmarks}.}
    \label{fig:pareto-600m-pool-benchmarks}
\end{figure}

Finally, we address the potential confounding in Section~\ref{sec:filtering} when we used the DCLM-Baseline filter on the 670M Common Crawl pool, which retains roughly 2\% of the data and potentially results in severe data scarcity with respect to model size. While we did train a very small 15M parameter model in that setting, and note that no matter the subset size, DCLM-Baseline will always be about 2 orders of magnitude smaller than the pool, we provide an experiment here where we instead use 100M DCLM-Baseline tokens. This increases the size by roughly an order of magnitude. Figure~\ref{fig:dclm10x} adds this artificially-increased DCLM-Baseline to Figure~\ref{fig:filters-600m-pool}, and Figure~\ref{fig:pareto-dclm-10x} adds it to the Pareto curve of Figure~\ref{fig:pareto-670m-pool}. Even though the dataset now has more tokens than in the RefinedWeb subset, the pool and looser variants still outperform it with sufficient model size and training. 

\begin{figure}[t]
    \centering
    \includegraphics[width=1\linewidth]{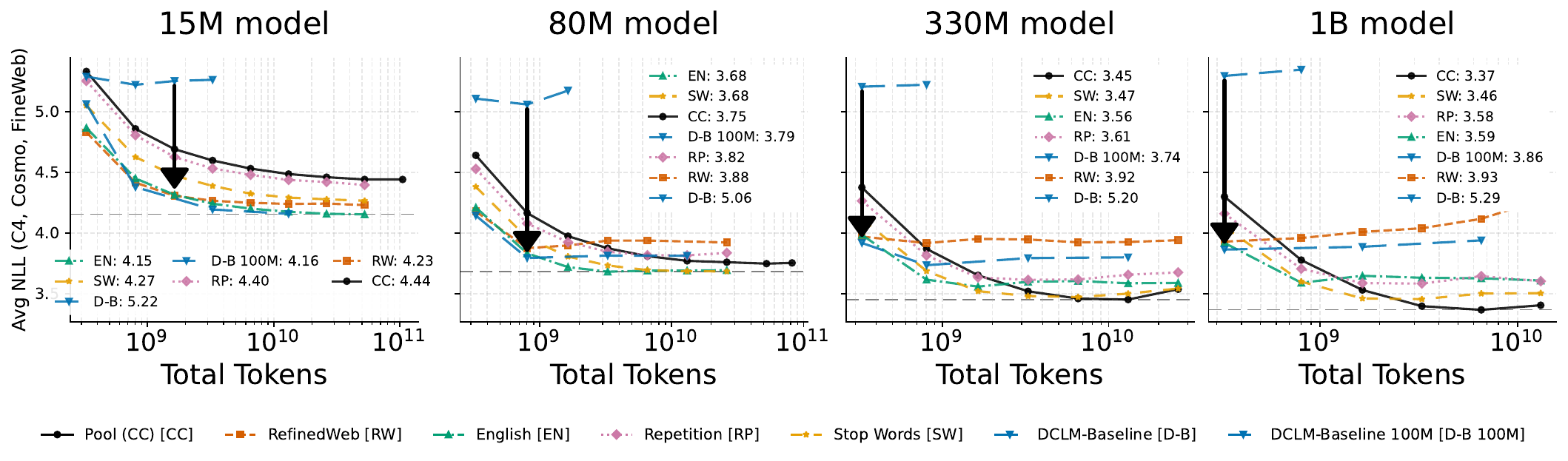}
    \caption{$670$M token CC pool and five filtered versions. Each plot is a different model size and the total tokens x-axis corresponds to the number of gradient steps taken (with epoching). The arrow shows the change in DCLM-Baseline performance with about an order of magnitude more tokens. }
    \label{fig:dclm10x}
\end{figure}

\begin{figure}[t]
    \centering
    \includegraphics[width=.5\linewidth]{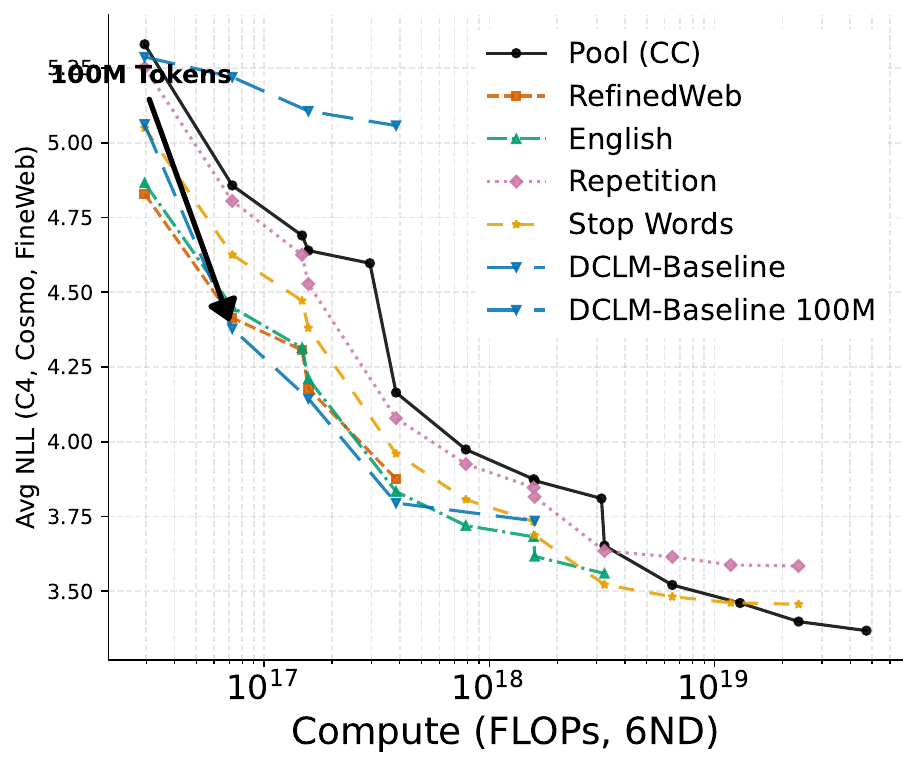}
    \caption{Pareto frontier of compute vs. average negative log-likelihood for CC pool and filtered datasets. The frontier is formed with the same runs as in Figure~\ref{fig:filters-600m-pool}. The arrow shows the change in DCLM-Baseline performance with about an order of magnitude more tokens.}
    \label{fig:pareto-dclm-10x}
\end{figure}

\begin{figure}[t]
    \centering
    \includegraphics[width=1\linewidth]{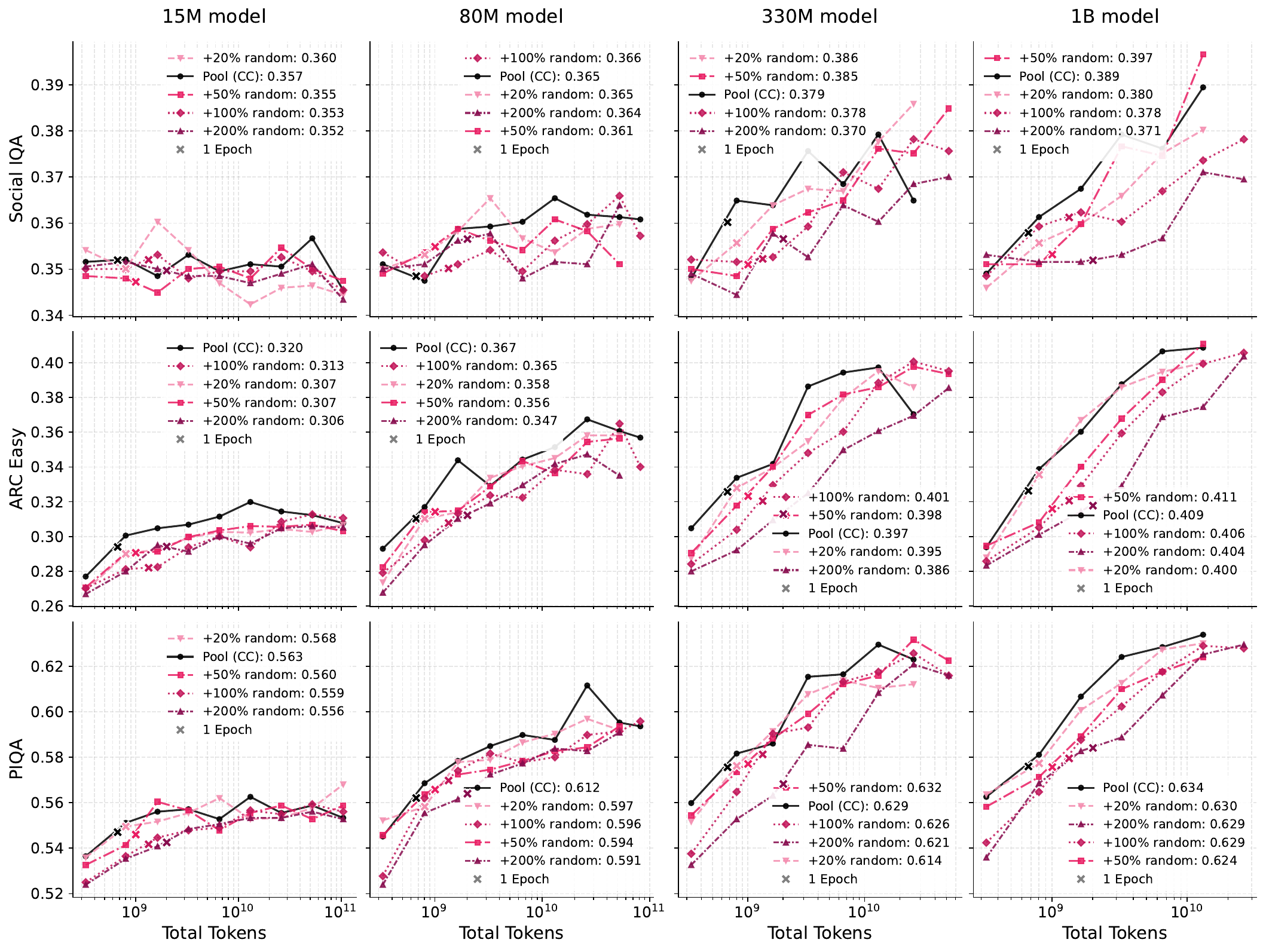}
    \caption{670M CC pool and random injection datasets. Each row is a downstream benchmark.}
    \label{fig:random-600m-pool-benchmarks}
\end{figure}

\begin{figure}[t]
    \centering
    \includegraphics[width=1\linewidth]{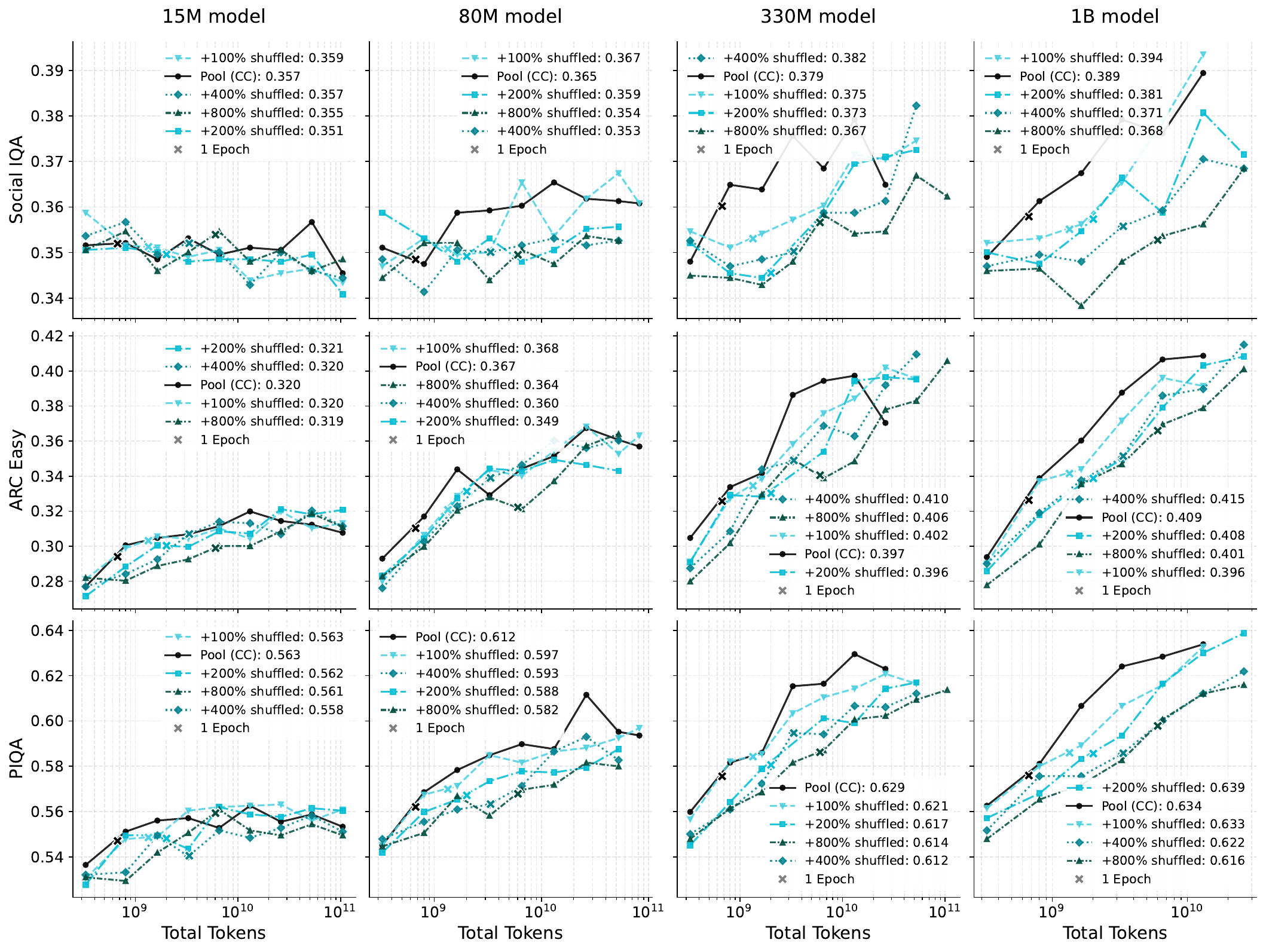}
    \caption{670M CC pool and shuffled-word injection datasets. Each row is a downstream benchmark.}
    \label{fig:shuffled-600m-pool-benchmarks}
\end{figure}

\clearpage

\section{Proofs and Additional Theory}\label{app:theory}

We now restate Proposition~\ref{thm:rank-necessity} from Section~\ref{sec:theory} and give its proof.

Consider predicting vector-valued outputs $y$
(tokens) using a rank $r$ linear transformation of an input $x$.
Assume the pairs $(x, y) \in \Rset^d \times \Rset^m$ come from one of $k$
tasks, where task $i$ occurs with probability $p_i > 0$ and generates $Y =
u_{\star,i}\, v_{\star,i}^\top X_i + \xi$ for independent mean-zero noise $\xi \in
\Rset^m$, where $\Sigma_i = \E[X_i X_i^\top]$ satisfy $\tr(\Sigma_i
\Sigma_{i'}) = 0$ for $i \neq i'$, so that tasks have orthogonal inputs:
one may exactly separate them.
We assume without loss of generality that $v_{\star, i} \in \range(\Sigma_i)$. 
The next proposition
follows.

\begin{proposition}[Rank Necessity under Orthogonal Inputs]
  Let the conditions above hold and $M_{\star,i} = u_{\star,i}\, v_{\star,i}^\top$,
  and define $M_\star = \sum_{i=1}^k M_{\star,i}$
  and $\Sigma = \sum_{i=1}^k p_i\Sigma_i$.
  If $\sigma_1 \ge \cdots \ge
  \sigma_\rho > 0$ are the $\rho \leq k$ positive singular values of
  $M_\star\Sigma^{1/2}$, then for any model rank $r$
  \[
  \min_{\substack{U \in \Rset^{m \times r}\\ V \in \Rset^{d \times r}}} \E\!\big[\|Y - UV^\top X\|^2\big] = \sum_{j=r+1}^{\rho} \sigma_j^2 + \E\!\big[\|\xi\|^2\big],
  \]
  where the sum evaluates to $0$ if $r \ge \rho$.
\end{proposition}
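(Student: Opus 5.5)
The plan is to reduce the population risk to a weighted Frobenius-norm approximation problem and then apply the Eckart--Young theorem. Throughout, write $W = UV^\top$; as $U,V$ range over $\Rset^{m\times r}\times\Rset^{d\times r}$, $W$ ranges over exactly the matrices of rank at most $r$.

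\textbf{Step 1: condition on the task.} With probability $p_i$ we have $X = X_i$ and $Y = M_{\star,i}X_i + \xi$. Since $\xi$ is independent of $X_i$ and has mean zero, the cross term vanishes. This gives
\[
\E\|Y - WX\|^2 = \sum_{i=1}^k p_i \tr\big((M_{\star,i}-W)\Sigma_i(M_{\star,i}-W)^\top\big) + \E\|\xi\|^2 .
\]

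\textbf{Step 2: use orthogonality to replace $M_{\star,i}$ by $M_\star$.} Because $\Sigma_i,\Sigma_{i'}$ are positive semidefinite,
\[
\tr(\Sigma_i\Sigma_{i'}) = \|\Sigma_i^{1/2}\Sigma_{i'}^{1/2}\|_F^2 .
\]
So the hypothesis forces $\Sigma_i^{1/2}\Sigma_{i'}^{1/2}=0$, i.e.\ $\range(\Sigma_i)\perp\range(\Sigma_{i'})$ for $i\neq i'$. Since $v_{\star,i}\in\range(\Sigma_i)$, we get $v_{\star,j}^\top\Sigma_i = 0$ for $j\ne i$. Hence $M_{\star,j}\Sigma_i=0$, and therefore $M_{\star,i}\Sigma_i = M_\star\Sigma_i$. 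Substituting, each summand equals $p_i\tr\big((M_\star-W)\Sigma_i(M_\star-W)^\top\big)$. Summing gives
\[
\E\|Y-WX\|^2 = \|(M_\star - W)\Sigma^{1/2}\|_F^2 + \E\|\xi\|^2 .
\]
I expect this step to be the main (if modest) obstacle. It is the only place where orthogonality of the inputs and the convention $v_{\star,i}\in\range(\Sigma_i)$ enter, and it must be done carefully so that the cross-task interference vanishes exactly.

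\textbf{Step 3: Eckart--Young.} Let $A = M_\star\Sigma^{1/2}$. Its rank is at most $\rank M_\star\le k$, so $\rho\le k$.

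\emph{Lower bound.} $W\Sigma^{1/2}$ has rank at most $r$, so by Eckart--Young $\|A - W\Sigma^{1/2}\|_F^2 \ge \sum_{j>r}\sigma_j^2$.

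\emph{Achievability.} Let $P_r$ be the orthogonal projector onto the top-$r$ left singular vectors of $A$, so that $P_rA$ is the best rank-$r$ approximation of $A$. Take $W = P_rM_\star$, which has rank at most $r$ and factors as $UV^\top$ with $U\in\Rset^{m\times r}$. Then $W\Sigma^{1/2} = P_rA$, so the bound is attained.

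When $r\ge\rho$, $P_rA = A$ and the sum is empty, giving $\E\|\xi\|^2$. This yields the stated formula.
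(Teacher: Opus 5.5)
Your proposal is correct and follows essentially the same route as the paper's proof. You decouple the noise, use orthogonality of the task inputs to collapse the risk to $\|(M_\star - W)\Sigma^{1/2}\|_F^2$, and then apply Eckart--Young. The differences are minor: you derive $M_{\star,j}\Sigma_i = 0$ from $\tr(\Sigma_i\Sigma_{i'}) = \|\Sigma_i^{1/2}\Sigma_{i'}^{1/2}\|_F^2$, which is slightly more careful than the paper's ``almost surely'' remark, and you attain the bound with $W = P_r M_\star$ instead of the paper's $A_r(\Sigma^{1/2})^\dagger$, which spares you the pseudoinverse and row-space argument.
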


\begin{proof}
Let $M = UV^\top$. We first decouple the noise $\xi$:
\begin{align*}
  \E\big[\|Y - MX\|^2\big]
  &= \sum_{i=1}^k p_i\,\E\big[\|M_{\star,i}\,X_i + \xi - M\,X_i\|^2\big] \\
  &= \underbrace{\sum_{i=1}^k p_i\,\E\big[\|(M_{\star,i} - M)X_i\|^2\big]}_{=: g(M)}
    + \E\big[\|\xi\|^2\big],
\end{align*}
where the scalar cross-term $2\E[\xi^\top (M_{\star,i} - M)X_i] = 0$ vanishes by independence and zero mean. Since the noise term is independent of $M$, it suffices to minimize $g(M)$ over matrices of rank at most $r$. We rewrite the multi-task objective into a single-target objective:
\begin{align*}
  g(M) &= \sum_{i=1}^k p_i\,\tr\!\big((M_{\star,i} - M)\,\Sigma_i\,(M_{\star,i} - M)^\top\big) \\ 
  &= \sum_{i=1}^k p_i\,\tr\!\Big(\big(M_{\star} - M - \sum_{j\neq i}M_{\star, j}\big)\Sigma_i\big(M_{\star} - M - \sum_{j\neq i}M_{\star, j}\big)^\top\Big) \\
  &= \tr\!\big((M_{\star} - M)\,\Sigma\,(M_{\star} - M)^\top\big) \\
  &= \big\|(M_\star - M)\,\Sigma^{1/2}\big\|_F^2,
\end{align*}
where the cross terms in the second step vanish by $M_{\star, j}\Sigma_i = u_{\star, j} v_{\star, j}^\top \E[X_i X_i^\top] = 0$ for $i\neq j$ since $v_{\star, j}^\top X_i = 0$ almost surely.

Let $A = M_\star\Sigma^{1/2}$ have positive singular values $\sigma_1 \ge \cdots \ge \sigma_\rho$. The rank-constrained minimization reduces to
\begin{equation*}
  \min_{\rank(M) \le r} g(M) = \min_{\rank(M)\le r} \|A - M\Sigma^{1/2}\|_F^2.
\end{equation*}
For any $M$ with $\rank(M) \le r$, the matrix $B = M\Sigma^{1/2}$ has rank at most $r$. By the Eckart--Young--Mirsky theorem, the squared Frobenius distance between $A$ and any rank-$r$ matrix $B$ is at least $\sum_{j=r+1}^\rho \sigma_j^2$. This lower bound is exactly achievable: let $A_r$ be the rank-$r$ truncated SVD of $A$. Because $A$ is formed by right-multiplying by $\Sigma^{1/2}$, its row space and therefore the row space of $A_r$ lies entirely within the $\range(\Sigma^{1/2})$. Thus, setting $M = A_r(\Sigma^{1/2})^\dagger$ yields a valid matrix with $\rank(M) \le r$ that satisfies $M\Sigma^{1/2} = A_r$. The minimum achievable excess loss is therefore $\sum_{j=r+1}^\rho \sigma_j^2$, which vanishes if and only if $r \ge \rho$.
\end{proof}

\subsection{Theoretical conditions for filter improvement}

We now give a simple model that explains when filtering can improve or degrade performance.
To understand this, we hypothesize that a sufficiently trained large model's conditional distributions can be defined by a similarity measure $s \colon \mathcal{X} \times \mathcal{X} \to \Rset_+$ over test inputs $x \in \mathcal{X}$ and train inputs $x_i \in \mathcal{X}$ from a training dataset $D=\{(x_i, y_i)\}_i$:
\[\P_D(y \mid x) := \sum_{i\in D} \frac{s(x, x_i) }{\sum_{j\in D} s(x, x_j )}\1_{y_i = y}. \]
The conditional distribution is the fraction of examples in $D$ with the same label $y$, weighted by $s$. According to the definition, we can affect the model's prediction at a given test input $x$ by including a similar $x'$ in the training dataset $D$.\ignore{ If the corresponding label $y'$ does not match a target $y^\star$, the cross-entropy loss $-\log \P_D(y^\star\mid x)$ on the test example $x$ will increase. If we include unrelated documents with $s(x, x') = 0$, the loss is unaffected. }

Let us consider the error (in KL divergence) of this predictor $\P_D$ compared
to a predictor using filtered data $\P_{\phi\circ D}$, where
$\phi:\mathcal X\times\mathcal Y\to\{0,1\}$ is a filter and
$\phi\circ D\subseteq D$.  We use the notation $D_{|y}$ to denote the restriction of $D$ to examples $(x_i, y_i)$ with $y_i=y$ and $D_{|\neq y}$ to denote the restriction of $D$ to examples $(x_i, y_i)$ with $y_i\neq y$. Expectations are defined with respect to an $s(x, \cdot)$-weighted dataset; we assume the relevant weighted masses are nonzero so that the displayed
conditional distributions and expectations are well-defined. We find a simple characterization of the error difference.
\begin{fact}[Characterization of Filter Improvement]
\label{fact:improvement}
    Given Dirac target conditional $\P_\text{t}(\cdot \mid x)$ with all mass on $y^\star$\ignore{, dataset $D = \{(x_i, y_i)\}_i$, and filter function $\phi \colon \sX \times \sY \to \{0, 1\}$, }, the improvement of $\P_{\phi\circ D}$ with respect to $\P_D$ in KL divergence to $\P_t$ is 
    \[
    KL( \P_t \mid\mid \P_{D}) - KL(\P_{t} \mid\mid  \P_{\phi\circ D}) =  -\log \paren*{\P_D(y^\star \mid x) + (1-\P_D(y^\star \mid x)) \frac{\E_{D_{|\neq y^\star}} \bracket{\phi(X, Y)} }{\E_{D_{|y^\star}} \bracket{\phi(X, Y)} }}.
    \]
\end{fact}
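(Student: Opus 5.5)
Since $\P_t(\cdot\mid x)$ is a point mass on $y^\star$, both KL divergences collapse to a single log-probability: $KL(\P_t\,\|\,\P_D) = -\log \P_D(y^\star\mid x)$, and likewise for $\phi\circ D$. The left-hand side is therefore
\[
\log \P_{\phi\circ D}(y^\star\mid x) - \log \P_D(y^\star\mid x) = -\log \frac{\P_D(y^\star\mid x)}{\P_{\phi\circ D}(y^\star\mid x)}.
\]
It then suffices to show that the ratio $\P_D(y^\star\mid x)/\P_{\phi\circ D}(y^\star\mid x)$ equals the argument of the logarithm in the statement. This ratio identity is the whole proof; everything else is bookkeeping.

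Write $S_{=} = \sum_{i\in D_{|y^\star}} s(x,x_i)$ and $S_{\neq} = \sum_{i\in D_{|\neq y^\star}} s(x,x_i)$. By definition,
\[
\P_D(y^\star\mid x) = \frac{S_{=}}{S_{=}+S_{\neq}}.
\]
For the filtered set, each term is multiplied by $\phi(x_i,y_i)\in\{0,1\}$. I will interpret $\E_{D_{|y^\star}}[\phi(X,Y)]$ as the $s(x,\cdot)$-weighted average, namely $\sum_{i\in D_{|y^\star}} s(x,x_i)\phi(x_i,y_i)/S_{=}$, and denote it $a$. Similarly, $b = \E_{D_{|\neq y^\star}}[\phi]$ uses normaliser $S_{\neq}$. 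The weighted masses surviving the filter are then $aS_{=}$ and $bS_{\neq}$, so
\[
\P_{\phi\circ D}(y^\star\mid x) = \frac{aS_{=}}{aS_{=}+bS_{\neq}}.
\]

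Taking the ratio gives
\[
\frac{\P_D(y^\star\mid x)}{\P_{\phi\circ D}(y^\star\mid x)} = \frac{aS_{=}+bS_{\neq}}{a(S_{=}+S_{\neq})} = \frac{S_{=}}{S_{=}+S_{\neq}} + \frac{S_{\neq}}{S_{=}+S_{\neq}}\cdot\frac{b}{a}.
\]
Here the first fraction is $\P_D(y^\star\mid x)$ and the coefficient of $b/a$ is $1-\P_D(y^\star\mid x)$. This is exactly the expression inside the logarithm, which completes the argument.

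The only delicate point is making the meaning of the expectations precise. It must be the $s(x,\cdot)$-weighted normalisation within each label class, as the paper indicates. The nondegeneracy assumptions ($a>0$, and $S_{=}+S_{\neq}>0$) are what make every quantity above well-defined.
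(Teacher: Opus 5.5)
Your proof is correct and takes essentially the same route as the paper: both reduce the KL difference to the log of the likelihood ratio $\P_D(y^\star\mid x)/\P_{\phi\circ D}(y^\star\mid x)$, write it as $\E_D[\phi(X,Y)]/\E_{D_{|y^\star}}[\phi(X,Y)]$ (your $(aS_{=}+bS_{\neq})/(a(S_{=}+S_{\neq}))$), and split the numerator over the two label classes. The only difference is that you specialize to the Dirac target at the outset, whereas the paper derives the identity for a general $\P_t$ and specializes at the end.
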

\begin{proof} In the following, we drop the first argument to $s(\cdot, \cdot)$ to simplify notation. We first simplify the difference using the definition of KL divergence: 
    \begin{align*}
        & KL( \P_{t} \mid\mid \P_{D}) - KL(\P_{t} \mid\mid  \P_{\phi\circ D})  
        =  \sum_{y\in\sY} \P_{t}(y\mid x) \log \frac{\P_{\phi \circ D}(y\mid x)}{\P_D(y\mid x)}. 
    \end{align*}
    We analyze the likelihood ratio:
\begin{align*}
    \frac{\P_D(y\mid x)}{\P_{\phi \circ D}(y\mid x)} & = \frac{\sum_{i\in D, y_i = y} \frac{s(x_i)}{\sum_{j\in D} s( x_j)}}{\sum_{i\in D, y_i = y} \frac{s(x_i)\phi(x_i, y_i)}{\sum_{j\in D} s( x_j)\phi(x_j, y_j)}} \\
    & = \frac{\sum_{j\in D} s( x_j)\phi(x_j, y_j)}{\sum_{j\in D} s( x_j)}\frac{\sum_{i\in D, y_i = y} s(x_i)}{\sum_{i\in D, y_i = y}s(x_i)\phi(x_i, y_i)} \\
    & = \frac{\E_{D} \bracket{\phi(X, Y)} }{\E_{D_{|y}} \bracket{\phi(X, Y)} } \\
    & = \frac{\P_{D}(y \mid x)\E_{D_{|y}} \bracket{\phi(X, Y)} + (1-\P_{D}(y \mid x)) \E_{D_{|\neq y}} \bracket{\phi(X, Y)} }{\E_{D_{|y}} \bracket{\phi(X, Y)} } \\
    & = \P_{D}(y \mid x) + \left(1-\P_{D}(y \mid x)\right) \frac{\E_{D_{|\neq y}} \bracket{\phi(X, Y)} }{\E_{D_{|y}} \bracket{\phi(X, Y)} }. 
\end{align*}
Plugging this back in, we find that the general difference is
    \begin{align*}
     - \sum_{y\in\sY} \P_{t}(y\mid x) \log \paren*{\P_D(y \mid x) + (1-\P_D(y \mid x)) \frac{\E_{D_{|\neq y}} \bracket{\phi(X, Y)} }{\E_{D_{|y}} \bracket{\phi(X, Y)} }}.
    \end{align*}
    Fact~\ref{fact:improvement} follows as a special case by setting $\P_t(y \mid x) = \1_{y = y^\star}$.
\end{proof}
The fact shows that two terms appear in the difference: the prevalence of the
label $y^\star$ in the original dataset $\P_D(y^\star\mid x)$ and a measurement
of filter performance via the ratio of the false positive rate to the true
positive rate,
\[
\frac{\E_{D_{|\neq y^\star}}[\phi(X,Y)]}{
\E_{D_{|y^\star}}[\phi(X,Y)]}.
\]
When $\P_D(y^\star\mid x)<1$, filtering improves the KL if and only if this
ratio is less than 1. If the prevalence is already high, there is little
improvement possible, and otherwise $\phi$ must distinguish correct from
incorrect labels on the $s$-weighted dataset. In the case of CC,
Table~\ref{tab:mmlu_averages} suggests that the prevalence on select MMLU
subjects is already high. In cases of strong filtering, e.g. removing $99\%$ of
the data including all $x'$ with high $s(x,x')$, the true positive rate may approach zero, making the ratio large and the KL worse.

\newpage

\end{document}